\newcommand*{\m}{\boldsymbol{\mu}}
\newcommand*{\thet}{\boldsymbol{\Theta}}
\newcommand*{\bb}{\beta\beta}
\newcommand*{\hbb}{\widehat{\beta\beta}}
\newcommand{\Keywords}[1]{\par\noindent
{\small{\em Keywords\/}: #1}}
\newtheorem{theorem}{Theorem}[section]
\newtheorem{proof}[theorem]{Proof}
\begin{document}

\title{On Estimating Many Means, Selection Bias, and the Bootstrap}

\author{Noah Simon \thanks{Department of biostatistics, University of Washington, \texttt{nrsimon@uw.edu}} \and Richard Simon\thanks{NIH, National Cancer, Institute Biometric Research Branch}}

\maketitle

\begin{abstract}
With recent advances in high throughput technology, researchers often find themselves running a large number of hypothesis tests (thousands+) and estimating a large number of effect-sizes. Generally there is particular interest in those effects estimated to be most extreme. Unfortunately naive estimates of these effect-sizes (even after potentially accounting for multiplicity in a testing procedure) can be severely biased. In this manuscript we explore this bias from a frequentist perspective: we give a formal definition, and show that an oracle estimator using this bias dominates the naive maximum likelihood estimate. We give a resampling estimator to approximate this oracle, and show that it works well on simulated data. We also connect this to ideas in empirical Bayes.

\Keywords{bootstrap, shrinkage, mean, empirical Bayes, James-Stein, regression to the mean, selection bias, compound decision theory}
\end{abstract}

\doublespace

\section{Introduction}\label{sec:intro}
Often, in modern applications, researchers are interested in testing and estimating effect sizes for many different features at once. In the simplest cases one is interested in estimating population means from a sample (often with the most extreme means as the most interesting).\\

In his revolutionary paper \citep{stein1956}, Charles Stein showed that, for estimating the means of three or more Gaussian random variables, one can do better (in terms of MSE) than simply using the sample means --- that cleverly shrinking the effect sizes will strictly dominate the obvious estimate. \citet{efron1973} illustrate this on baseball batting averages and give some insight into the phenomenon. If the population means are very close together, then the sample means will be too spread out --- the largest sample mean likely came from a large population mean, but also got lucky and won the ``largest statistic'' competition, so it is biased high (similarly true for the smallest sample mean, though biased low).\\

This regression-to-the-mean type idea is concerning for estimating effect sizes in high-throughput experiments. Generally scientists look at the most extreme effects in the sample and report the unadjusted sample estimates of these effect-sizes. As such, on attempts to confirm these effects, effect sizes (even of very statistically significant effects) are often much less extreme than originally reported.\\

This selection bias has been explored using ideas in empirical Bayes and compound decision theory (\citet{robbins1956}, \citet{robbins1985} among others), though much of this focus was on asymptotically sub-minimax estimators, rather than selection bias in particular, and before the explosion of high dimensional data and modern computing, this was more a theoretical than practical pursuit.

With new important applications and cheap computing, these problems have gained popularity. Recently \citet{efron2011} gave an elegant approach to correct for this selection bias by applying empirical Bayes via Tweedie's formula. Unlike the estimator of \citet{stein1961}, which shrinks everything toward the overall mean (ignoring all information beyond the square sum of the statistics), Efron's formulation gives locally adaptive shrinkage (shrinking based on the local shape of the histogram of statistics). It is particularly appealing as it requires no parametric assumptions on the prior. These ideas have been extended by (\citet{wager2013}, \citet{jiang2009}, \citet{brown2009}, among others), with very efficient estimates of the marginal likelihood.

In this paper, we give a frequentist formulation of the selection bias problem. We show how this bias affects mean square error, and give intuition for classical frequentist shrinkage ideas. We also discuss connections between this ``frequentist selection bias'' and the more common Bayesian shrinkage in the literature.

Motivated by our definition of frequentist selection bias, we give a simple procedure based on a parametric bootstrap to estimate the frequentist bias. We compare this bootstrap approach to several flavors of empirical Bayes. In situations where empirical Bayes is applicable, the two perform comparably (though empirical Bayes is slightly stronger), however we also detail many situations where empirical Bayes solutions are intractable while our bootstrap shrinkage is simple and effective.

\section{Selection Bias}
Suppose we have a large number ($p$) of features, and for each feature ($i$) we have a sample estimate ($z_i$) of its mean ($\mu_i$) which is normally distributed with variance $1$
\[
z_i \sim N\left(\mu_i,1\right)
\]
This is approximately the scenario we get from many t-tests (as in \citet{tusher2001} and others). Now, clearly for a fixed $i$, $z_i$ is an unbiased estimate of $\mu_i$. Generally however we select the largest $z_i$ (or $|z_i|$) and would like to estimate its corresponding mean. Because we have selected an extreme statistic, if we use the unadjusted statistic as an estimate of the mean, we incur a selection bias.\\

 To explore this bias let us first introduce some notation. Let $z_{[k]}$ denote the $k$-th order statistic, and $i(k)$ denote the index of the $k$-th order statistic (i.e., $z_{i(k)} = z_{(k)}$). Note that since the ordering of our statistics is stochastic, $i(k)$ is a \emph{stochastic} index (the inverse rank of $z_{(k)}$).\\
The bias we are interested in is 
\begin{equation}\label{shrink:c}
\operatorname{E}\left[z_{(1)} - \mu_{i(1)}\right]
\end{equation}
Note, both the order statistic $z_{[1]}$ and the mean $\mu_{i(1)}$ are random variables (the mean is a random variable because the index $i(1)$ is stochastic).

For each rank, $k$, we can use the same idea and define our bias as 
\begin{equation}\label{shrink:C}
\beta_k = \operatorname{E}\left[z_{(k)} - \mu_{i(k)}\right]
\end{equation}

If we knew these biases (unrealistic in practice) then we could estimate the means of the extreme statistics by
\begin{equation}\label{eq:oracle}
\tilde{\mu}_{i(k)} = z_{(k)} - \beta_k
\end{equation}
This estimate dominates the naive estimate, $\hat{\mu}_i = z_i$, in terms of $\ell_2$ loss. This is straightforward, as
\begin{align*}
\operatorname{E}\left[\sum_{i=1}^p \left(\hat{\mu}_i - \mu_i\right)^2\right] &= \operatorname{E}\left[\sum_{k=1}^p \left(\hat{\mu}_{i(k)} - \mu_{i(k)}\right)^2\right]\\
&= \sum_{k=1}^p \operatorname{E}\left[\left(z_{(k)} - \mu_{i(k)}\right)^2\right]\\
&=\sum_{k=1}^p \left[\beta_k^2 + \operatorname{var}\left(z_{[(k)} - \mu_{i(k)}\right)\right]\\
&=\sum_{k=1}^p \left[\beta_k^2 + \operatorname{var}\left(z_{(k)} - \beta_k - \mu_{i(k)}\right)\right]\\
&=\sum_{k=1}^p \beta_k^2 + \operatorname{E}\left[\sum_{k=1}^p \left(\tilde{\mu}_{i(k)} - \mu_{i(k)}\right)^2\right]
\end{align*}
Our risk decreases by the sum of the squared biases. For the remainder of the manuscript, we will refer to the estimates in Eq~\eqref{eq:oracle}, with the true biases known, as our ``oracle estimates.''\\

For ease of reading we will define the following notation. We will use $\m$ to refer to a vector of means, with $\mu_i$ to denote the $i$-th element of $\m$. Let $\beta\left(\m\right)$ denote the vector of biases for a given mean vector $\m$. More specifically
\[
\beta\left(\m\right)_k = \operatorname{E}\left[z_{(k)} - \mu_{i(k)}\right].
\]

\subsection{Estimating the Bias}

In practice we will never know the bias \eqref{shrink:C} and must estimate it. We propose a simple $2$ step method. We first estimate $\m$ by maximum likelihood, giving, in this case $\hat{\m} = z$. We then use the biases for this estimated model, as estimates of the bias for our original model
\begin{equation}\label{eq:b}
\hat{\beta}\left(\m\right) = \beta\left(\hat{\m}\right)
\end{equation}
Finally, as our updated estimate of $\m$ we use $\tilde{\mu}_{i(k)} = \hat{\mu}_{i(k)} - \hat{\beta}_{i(k)} = z_{(k)} - \hat{\beta}_{i(k)}$. This is just a parametric bootstrap.

\subsection{Calculating the Bias of the Estimated Model}

Now, given known means $\mu_i$, $i = 1,\ldots,p$ we need to calculate the bias. This is most tractable by monte-carlo. Though the monte-carlo is straightforward, we give it in full detail.
\begin{enumerate}
\item for $b=1,\ldots,B$
\begin{enumerate}
\item Simulate $z_1^b,\ldots,z_p^b$ a $p$-vector of Gaussians with means $\mu_1,\ldots,\mu_p$ and variance 1
\item Find the $k$-th order statistic $z_{(k)}^b$ and the index of its corresponding mean $i(k)^b$
\end{enumerate}
\item Calculate the $k$th bias as 
\[
\frac{1}{B}\sum_{b=1}^B \left(z_{(k)}^b - \mu_{i(k)^b}\right)
\]
\end{enumerate}
As $B$ grows, this will give increasingly accurate calculations of the bias.\\

\subsection{Second Order Bias}
In many cases we can improve the estimate in \eqref{eq:b}. Heuristically the further spread out the means are, the smaller the bias is. Intuitively, our sample means are more spread out than the true means --- thus our estimates of the bias, are themselves biased. Often, the bias estimate in the bootstrap sample will be smaller than the true bias. 

Formally, this second-order bias is
\[
\bb\left(\m\right) = \operatorname{E}\left[\beta\left(\m\right) - \beta\left(\hat{\m}\right)\right].
\]
If we knew this quantity, then we could update our estimate $\hat{\beta}\left(\m\right)$ to
\[
\hat{\beta}^{(2)}\left(\m\right) = \hat{\beta}\left(\m\right) + \bb\left(\m\right)
\]
Unfortunately, this quantity is never known in practice. To approximate it, we use the same trick as before
\begin{equation}\label{eq:bb}
\hbb\left(\m\right) = \bb\left(\hat{\m}\right)
\end{equation}
and use the estimate
\begin{equation}\label{eq:second-order}
\hat{\beta}^{(2)}\left(\m\right) = \hat{\beta}\left(\m\right) + \hbb\left(\m\right)
\end{equation}
It is straightforward to calculate the estimate of second-order bias in \eqref{eq:bb} via monte-carlo, though, for the sake of brevity, we leave the details to the reader. For the remainder of the manuscript we will refer to 
\begin{align}
\tilde{\mu}_{i(k)} &= z_{(k)} - \hat{\beta}_k\\
\tilde{\mu}^{(2)}_{i(k)} &= z_{(k)} - \hat{\beta}^{(2)}_k
\end{align}
as the ``first-order'' and ``second-order'' bootstrap estimates of effect-size.

This second-order bias estimate $\hbb\left(\mu\right)$ will also be biased for the true second-order bias $\bb\left(\mu\right)$. One might consider higher order de-biasing. There is a ``bias-variance'' tradeoff here, and in practice, while the second-order correction has been useful in some problems, we have not seen improvement past second-order corrections.

\subsection{Simple Example}
We will give a simple example illustrating the difference between the ``local'' shrinkage of our method and the ``global'' shrinkage of James-Stein estimation. In our example we simulate $1000$ features, $990$ of which have $\mu_i = 0$; the remaining $10$ have $\mu_i = 6$. We can see the results in Figure~\ref{fig:simple}. James-Stein gives great shrinkage for the bulk of the features, however it \emph{completely} shrinks away the interesting effects. In contrast, our resampling approach can take local behavior into account, and while it shrinks the bulk of the effect sizes to $0$, it correctly pushes the estimates for the interesting effect towards $6$.
 
\begin{figure}
  \begin{center}
    \includegraphics[scale=0.9]{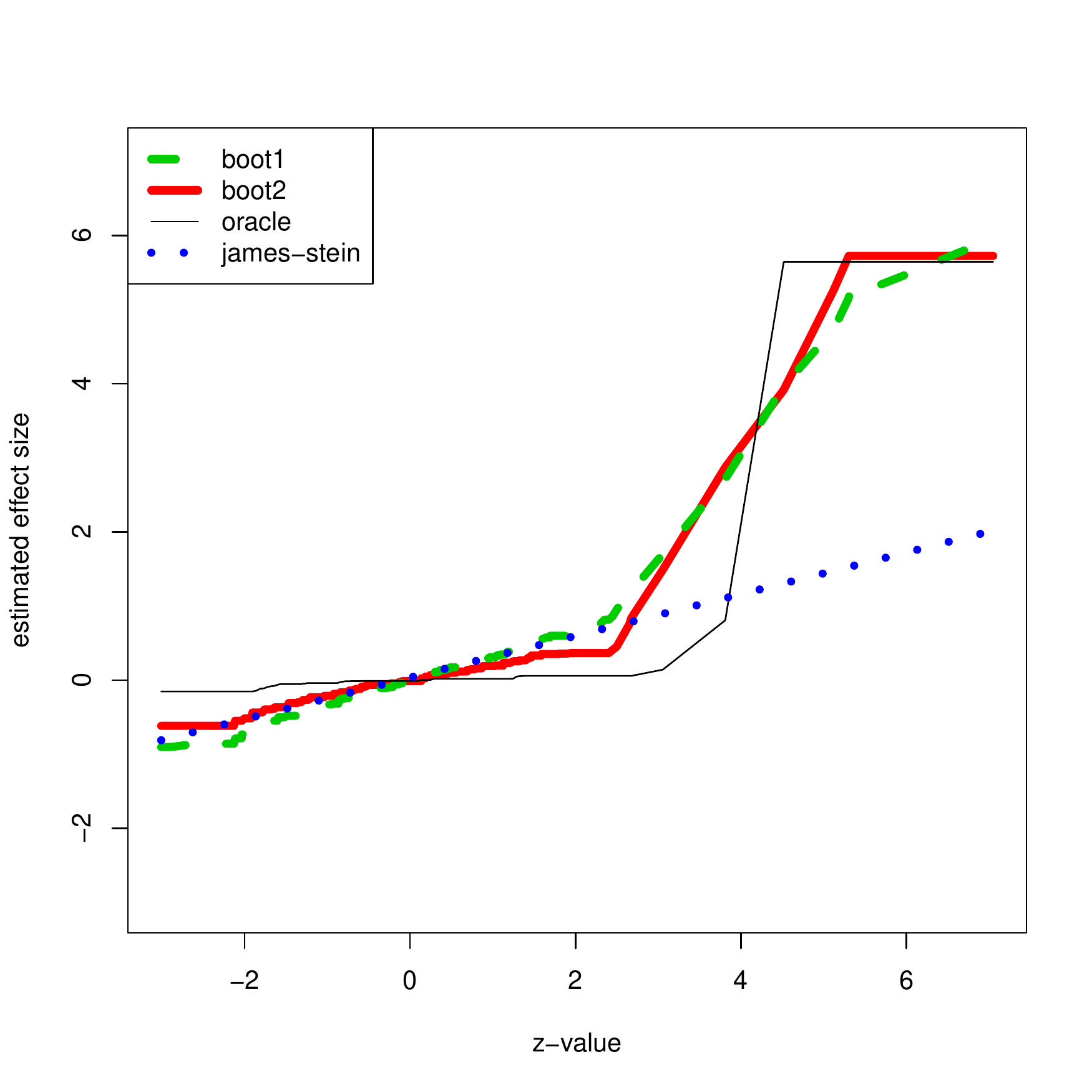}
  \end{center}
  \caption{Plots showing shrinkage, with ordered $z$-values on the x-axis vs estimated means on the y-axis. The blue dotted line is the James-Stein estimate. Green dashed and red solid lines are the [smoothed] first and second order bootstrap estimates. The skinny black line is the oracle estimate, using the biases calculated from the true means.}
  \label{fig:simple}
\end{figure}

\subsection{Empirical Bayes}
We will briefly review the empirical Bayes approach of \citet{efron2011}. If we assume some prior density ($g$) on the means
\[
\mu\, \sim\, g(\cdot) \textrm{ with } z|\mu\, \sim\, N\left(\mu,1\right)
\]
and let $f(z)$ denote the marginal distribution of $z$
\[
f(z) = \int \phi\left(z - \mu\right)g(\mu)d\mu
\]
then the posterior expectation of $\mu$ given $z$ is
\begin{equation}\label{eq:eBayes}
\operatorname{E}\left[\mu \middle | z\right] = z + f^{'}(z)/f(z)
\end{equation}
This result is known as Tweedie's theorem \citep{efron2011}. We will term $f^{'}(z)/f(z)$ as the Bayesian bias --- it adjusts the naive estimate and accounts for selection bias (this will be made more clear in Section~\ref{sec:rec}). This approach is elegant as \eqref{eq:eBayes} does not require $g$, or a direct estimate of $g$. Instead, one needs only estimate $f$: a smoothed histogram of the $z_i$  (and its derivative). This is tractable if the number of features is large --- though estimating the derivative is still difficult, and the degree of smoothing can influence results.\\

Efron suggests using Lindsey's method \citep{efron1996} for the density estimate. For Lindsey's method, one bins the data, and uses poisson regression with a spline or polynomial basis and an offset for bin-size to estimate the density. More recently \citet{wager2013} and others have given nonparametric approaches which generally outperform Lindsey-based approaches. In our comparisons in Section~\ref{sec:emp}, we include the estimate of \citet{wager2013}, which we will refer to as \texttt{nlpden}.

\subsection{Reconciling the differences}\label{sec:rec}
We have two approaches to selection bias (frequentist and Bayesian) that at first glance are very different however on deeper consideration they are actually quite similar. The Bayesian bias that we estimate in the Bayesian approach is
\begin{equation}
z_i - \operatorname{E}\left[\mu\middle|z_i\right]
\end{equation}
whereas in the frequentist approach the bias is
\begin{equation}\label{eq:f}
\operatorname{E}\left[z_{[k]} - \mu_{i(k)}\right]
\end{equation}
Though frequentist, \eqref{eq:f} already has a Bayesian flavor as $i(k)$, the index of our mean, is stochastic. In the Bayesian framework (where $\mu$ has some prior distribution), we can show that (under some assumptions) these two biases are asymptotically the same.

\begin{theorem}\label{theorem:Bayes}
Let $G(\cdot)$ be a probability measure on $\mathbb{R}$ with bounded support. Assume, for $i=1,\ldots,p$, 
\[
\mu_i\,\overset{iid}{\sim}\, G(\cdot).
\]
Then, as $p\rightarrow\infty$, for any $t\in (0,1)$ we have
\[
\operatorname{E}\left[z_{[\lfloor tp\rfloor]} - \mu_{i(\lfloor tp\rfloor)}\right] \rightarrow F^{-1}\left(t\right) - \operatorname{E}\left[\mu \middle| z = F^{-1}\left(t\right)\right]
\]
where $\lfloor \cdot \rfloor$ is the ``floor'' function and $F^{-1}\left(t\right)$ is the $t$-th quantile of the marginal distribution $dF(z) = \int \phi\left(z - \mu\right)dG(\mu)$.
\end{theorem}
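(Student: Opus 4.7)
The plan is to rewrite the frequentist selection bias as the expectation of a deterministic one-variable function evaluated at a central order statistic, and then identify its limit via classical sample-quantile asymptotics.

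The first, and conceptually crucial, step is a conditioning identity. Because the pairs $(\mu_i, z_i)$ are i.i.d., conditional on the full vector $\boldsymbol{z} = (z_1,\ldots,z_p)$ the coordinates $\mu_j$ are independent with $\mu_j \mid \boldsymbol{z} \sim \mu_j \mid z_j$. The index $i(k)$ is a deterministic function of $\boldsymbol{z}$, so, writing $m(z) := \operatorname{E}[\mu \mid z]$ for the posterior mean under the joint $(\mu, z)$-law, the tower property gives
\[
\operatorname{E}\!\left[\mu_{i(k)} \mid \boldsymbol{z}\right] \;=\; m(z_{i(k)}) \;=\; m(z_{[k]}),
\]
and hence
\[
\operatorname{E}\!\left[z_{[k]} - \mu_{i(k)}\right] \;=\; \operatorname{E}\!\left[h(z_{[k]})\right], \qquad h(z) := z - m(z).
\]
Note that $h(F^{-1}(t))$ is precisely the right-hand side of the claimed limit, so it remains to show $\operatorname{E}[h(z_{[\lfloor tp\rfloor]})] \to h(F^{-1}(t))$.

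Next I would invoke the asymptotics of central sample quantiles. Marginally, the $z_i$ are i.i.d.\ from $F$, whose density $f$ is strictly positive and real-analytic (being the Gaussian convolution of the compactly supported $G$), so $F$ is continuous and strictly increasing. The Glivenko--Cantelli theorem then yields $z_{[\lfloor tp\rfloor]} \to F^{-1}(t)$ almost surely for each $t \in (0,1)$. Continuity of $m$, and hence of $h$, follows from the analyticity of the numerator and denominator in $m(z) = \int \mu\, \phi(z-\mu)\, dG(\mu)/f(z)$ together with $f > 0$. Therefore $h(z_{[\lfloor tp\rfloor]}) \to h(F^{-1}(t))$ almost surely.

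The remaining and most technical step is to upgrade this almost-sure convergence to convergence in $L^1$, for which I would verify uniform integrability of $\{h(z_{[\lfloor tp\rfloor]})\}_{p}$. Since $m$ takes values in the (bounded) support of $G$, $|h(z)| \leq |z| + C$, so it suffices to control tails of the central order statistic. A Hoeffding bound applied to $\sum_i \mathbf{1}\{z_i > r\}$ (and, symmetrically, to $\sum_i \mathbf{1}\{z_i < -r\}$) shows that $\operatorname{P}\!\left(|z_{[\lfloor tp\rfloor]}| > r\right)$ decays exponentially in $p$ once $r$ is large enough that $F(r) > t$ and $F(-r) < t$, delivering the required uniform integrability. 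This is the main obstacle I anticipate; the other steps are essentially bookkeeping, but the whole argument hinges on the conditioning identity in the first step, which is what actually converts the frequentist selection bias into the Bayesian bias $z - \operatorname{E}[\mu\mid z]$.
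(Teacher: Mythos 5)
Your proof is correct, and its engine is the same as the paper's: the observation that the ranking event is conditionally independent of $\mu_1$ given $z_1$, so that conditioning converts $\mu_{i(k)}$ into the posterior mean $m(z_{[k]})$. The paper expresses this as $\operatorname{P}\left(i(\lfloor pt\rfloor)=1 \mid \mu_1, z_1\right)=\operatorname{P}\left(i(\lfloor pt\rfloor)=1 \mid z_1\right)$ inside a Bayes-ratio manipulation, after first using exchangeability to reduce $\operatorname{E}\left[\mu_{i(\lfloor pt\rfloor)}\right]$ to $\operatorname{E}\left[\mu_1 \mid i(\lfloor pt\rfloor)=1\right]$; you reach the same identity more directly by conditioning on the whole vector $\boldsymbol{z}$. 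Where you genuinely diverge is in the passage to the limit. The paper keeps the two terms separate and ends with $\int \operatorname{E}\left[\mu_1\mid z_1\right]dP\left(z_1 \mid i(\lfloor pt\rfloor)=1\right)\rightarrow \operatorname{E}\left[\mu\mid z=q_t\right]$, justified only by an appeal to ``concentration of measure combined with the finite support of $G$''; that step implicitly requires both continuity of the posterior mean and tail control for the conditional law of $z_1$ given the rank event, neither of which is verified. Your reformulation of the entire bias as $\operatorname{E}\left[h\left(z_{[\lfloor tp\rfloor]}\right)\right]$ with $h(z)=z-m(z)$ reduces everything to a single central order statistic, for which almost-sure quantile convergence, continuity of $m$ (via analyticity of $f$ and of the numerator, plus $f>0$), and uniform integrability (via $|h(z)|\le |z|+C$ and exponential tail bounds) complete the argument rigorously. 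The one detail still to spell out is that your Hoeffding/binomial bound on $\operatorname{P}\left(|z_{[\lfloor tp\rfloor]}|>s\right)$ must be shown to decay in the threshold $s$ (not merely in $p$) so that it can be integrated over the tail; a Chernoff bound of the form $\left(e\,(1-F(s))/(1-t)\right)^{(1-t)p}$ does this uniformly in $p$. In short: same key lemma, but your decomposition is cleaner and supplies the rigor that the paper's final step elides.
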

The proof is given in the appendix. The assumption of bounded support for $G$ can easily be weakened, but it simplifies the proof.

This raises a similar question in the frequentist framework. Namely, if we denote the empirical distribution of the means by $G_p$:
\[
G_p\left(\mu\right) = \frac{1}{p}\sum_i I\left\{\mu = \mu_i\right\}
\]
and $G_p\rightarrow G$, for some well-behaved $G$, then can we treat $G$ like a prior distribution, and get the same asymptotic equivalence? We believe the answer is yes (and simulations support this). The mathematics for this problem becomes significantly more complex, and is beyond the scope of this manuscript.  Questions of this flavor (trying to treat the empirical distribution of the means as a prior) have been explored in the compound decision theory literature (though none that we have seen use this formal definition of frequentist selection bias).

\section{Extensions to General Problems}\label{sec:general}
An important strength of the bootstrap approach is that it is not restricted to this Gaussian scenario --- the bootstrap is flexible and can be applied to more complex problems intractable for empirical Bayes. Consider the more general scenario, where the $z$ vector has a joint distribution parametrized both by our parameters of interest $\m$, and some nuisance parameters $\thet$.
\[
z \sim F\left(\m,\thet\right)
\]
for some known functional form $F$. We can generalize our bias from before as
\[
\beta_k = \operatorname{E}_{F\left(\m,\thet\right)}\left[\hat{\mu}_{i(k)} - \mu_{i(k)}\right]
\]
where $\hat{\mu}_i$ is the MLE for $\mu_i$. The same risk reduction holds for estimates $\tilde{\mu}_{i(k)} = \hat{\mu}_{i(k)} - \beta_k$.

In this case we estimate the bias by
\[
\hat{\beta}\left(\m,\thet\right) = \beta\left(\hat{\m},\hat{\thet}\right)
\]
where $\hat{\m}$ and $\hat{\thet}$ are maximum likelihood estimates. We illustrate this on estimating $\rho^2$ values in regression with categorical variables in Section~\ref{sec:emp}, but it can be further applied to a wide variety of problems involving non-gaussian distributed statistics, dependence, and more complicated model-based estimates.

\section{Empirical Results}\label{sec:emp}
We give empirical results for the bootstrap method both for the simple gaussian scenario as well as the more complicated categorical variable regression problem. In the gaussian scenario, we compare the performance of the bootstrap to empirical Bayes methods, on real and simulated data. We see that both empirical Bayes and the bootstrap effectively reduce selection bias (in some cases quite drastically). The second order bootstrap outperforms the first order, and is comparable though somewhat outperformed by the best empirical Bayes estimates ({\tt nlpden}). However, these empirical Bayes methods cannot handle the categorical variable regression problem, while the bootstrap approach still performs well there.

\subsection{Simulated Results}\label{sec:sim}
We know that the potential gain of these procedures is based on the bias of our rank estimates, which in turn is based on the spacing of the means. We consider $6$ simulated scenarios with varying mean spacings to explore the bias estimation of the procedures in different regimes. 
\begin{enumerate}
\item All $\mu_i = 0$ --- hypothesis testing with a global null.
\item $500$ of the $\mu_i = 0$, and $500$ of the $\mu_i = 6$ --- a simple mixture model.
\item $900$ of the $\mu_i = 0$, and $100$ of the $\mu_i = 6$  --- hypothesis testing with a strong clustered set of alternatives
\item $900$ of the $\mu_i = 0$ and $100$ of the $\mu_i \sim N(0,2)$ --- hypothesis testing with a weak diffuse set of alternatives
\item All $\mu_i \sim N(0,1)$ --- diffuse means
\item $5$ clusters each with $200$ features. In the $j$-th cluster ($j=1,\ldots,5$) all $\mu_i = 6*j$ --- separated mixture model.
\end{enumerate}

Performance can be see in Table~\ref{tab:big}. While competitive with empirical Bayes, the bootstrap is outperformed by the nonparametric estimates of {\tt nlpden}. This table also illustrates the improvement from our second stage of bootstrap debiasing.

\begin{table}[ht]
\begin{center}
\begin{tabular}{rrrrrrr}
  \hline
 & 1 & 2 & 3 & 4 & 5 & 6 \\ 
  \hline
  boot1 & 0.10 & 0.22 & 0.17 & 0.24 & 0.54 & 0.30 \\ 
  boot2 &  \textcolor{blue}{0.04} &  \textcolor{blue}{0.12} &  \textcolor{blue}{0.09} &  \textcolor{blue}{0.20} &  \textcolor{blue}{0.53} &  \textcolor{blue}{0.18} \\ 
  spline3 & 0.02 & 0.16 & 0.19 & 0.19 & 0.53 & 1.00 \\ 
  spline5 & 0.04 & 0.10 & 0.09 & 0.19 & 0.53 & 1.00 \\ 
  spline7 & 0.09 & 0.09 & 0.07 & 0.20 & 0.56 & 0.96 \\ 
  {\tt nlpden} & \textcolor{MidnightBlue}{0.01} & \textcolor{MidnightBlue}{0.06} & \textcolor{MidnightBlue}{0.04} & \textcolor{MidnightBlue}{0.19} & \textcolor{MidnightBlue}{0.53} & \textcolor{MidnightBlue}{0.08} \\
  oracle & \textcolor{BrickRed}{0.002} &  \textcolor{BrickRed}{0.05} &  \textcolor{BrickRed}{0.03} &  \textcolor{BrickRed}{0.17} &  \textcolor{BrickRed}{0.52} &  \textcolor{BrickRed}{0.05}\\
   \hline
\end{tabular}
\end{center}
\caption{MSE, as a fraction of the MSE of the naive estimate, for each of the $6$ scenarios described (averaged over 20 trials). Boot1 and Boot2 are the (smoothed) one stage and two stage bootstrap estimates (With 100 bootstrap samples). spline3, 5,and 7 are empirical Bayes estimates using Lindsey's method with a spline basis and $3,\,5$, and $7$ df. Colored text is for emphasis and ease of reading.}
\label{tab:big}
\end{table}

We also have plots showing the shrinkage from the bootstrap and empirical Bayes for scenarios $1$, $3$,$4$,$6$ (from left to right and top to bottom). We can see in Plot~\ref{plot:shrink} that the bootstrap and empirical Bayes estimates are very similar. Though entirely unrealistic as an applied scenario, the results in scenario $6$ for both the bootstrap and {\tt nlpden} are particularly neat.

\begin{figure}
  \begin{center}
    \includegraphics[scale=0.9]{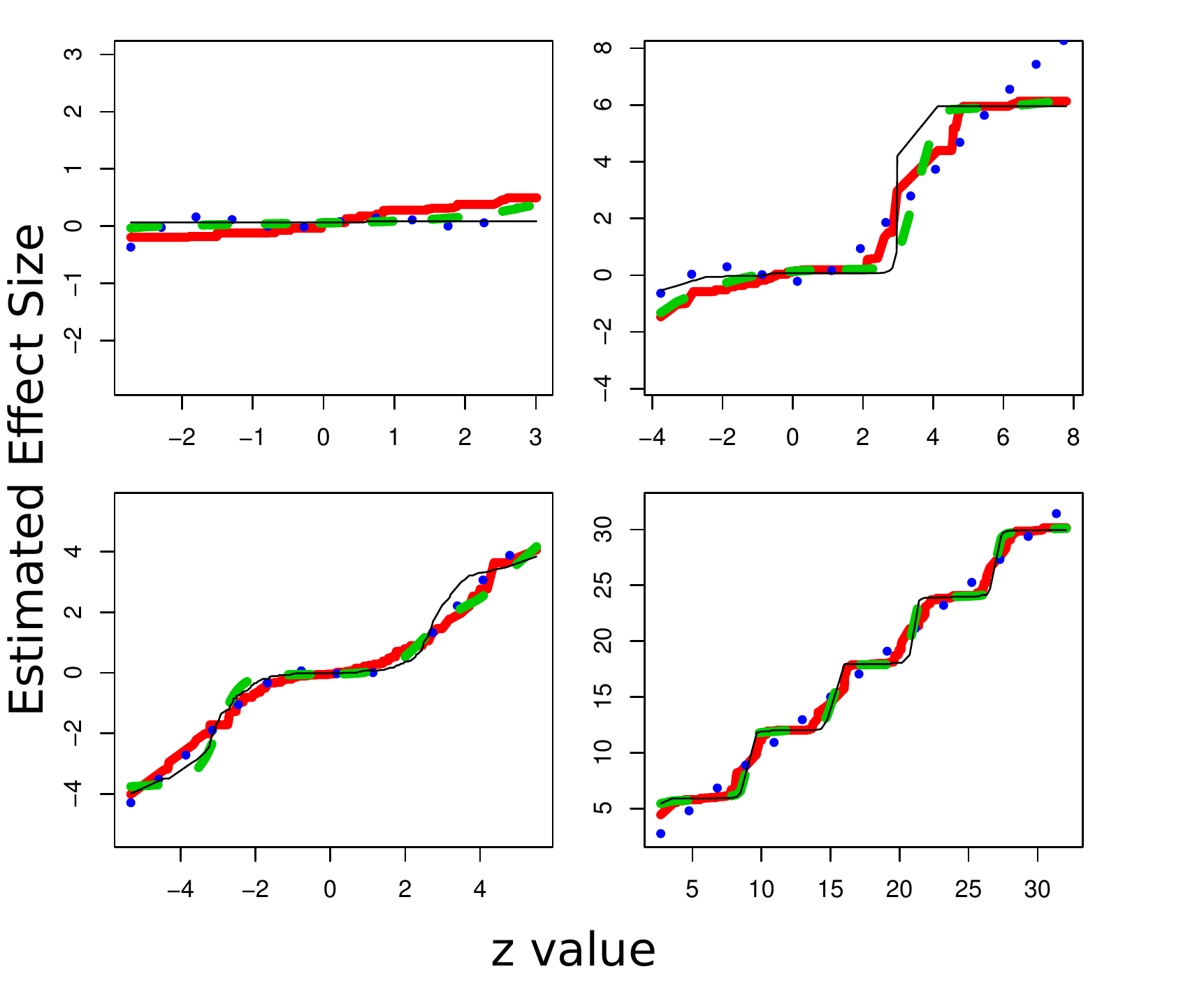}
  \end{center}
  \caption{Plots showing shrinkage, with ordered $z$-values on the x-axis vs estimated means on the y-axis. Plots are for scenarios $1$, $2$, $3$, and $6$. Blue dotted lines are estimates for Lindsey-estimated empirical Bayes (using a spline with 5 df), green dashed lines are estimates from {\tt nlpden}, and red solid lines are estimates from the (smoothed) second-order bootstrap. The skinny black line is the oracle estimate, using the biases calculated from the true means.}
  \label{plot:shrink}
\end{figure}

\subsection{Non-Gaussian Statistics}
One of the major strengths of our procedure is the ability to deal with nongaussian statistics. We now give a simulated example; estimating $\rho^2$, the coefficient of determination, in the regression of a continuous variable on a $3$-level categorical variable. We simulated an $n\times p$ matrix $X$ of features, and an $n\times p$ matrix $Y$ of responses (each feature had its own, response). The entries in each column of $X$ were sampled with equal probability from the $3$ classes (though we ensured at least $2$ observations per class in each column). The $\rho^2$ values for each regression were selected based on various schemes, and each entry of $Y$ was independently simulated as
\[
Y_{ij} = \sum_{k=0,1,2} I\{X_{ij} = k\} \theta_{jk} + \epsilon_{ij} 
\]
where $\epsilon_{ij} \sim N(0,1)$, and $\boldsymbol{\theta}_j = \{\theta_{j0},\theta_{j1},\theta_{j2}\}$ were chosen so that the regression had the prespecified $\rho^2$-value. We apply the general method (with nuisance parameters) detailed in Section~\ref{sec:general}. More specifically, we estimate our regression coefficients (and the variance of $\epsilon_{\cdot j}$), and calculate the bias from those estimated models.

For these examples we used $p=1000$ features with $n=50$ observations. The schemes used for choosing $\rho^2$ were:
\begin{enumerate}
\item All $\rho^2 = 0$
\item $\rho^2 \sim \operatorname{exponential}\left(10\right)$ (with values truncated at $0.99$)
\item $800$ of the $\rho^2 \sim \operatorname{exponential}\left(20\right)$, and $200$ $\rho^2 \sim N(0.55, 1/20)$ (again with truncation at $0.99$)
\end{enumerate}

Performance of the bootstrap methods on these examples can be seen in Table~\ref{tab:non-gaussian}. While perhaps not as extreme as the improvement in our gaussian examples, the bootstrap shrinkage does still show substantial gain over the naive estimates. We illustrate the shrinkage plots for scenarios $2$ and $3$ in Figure~\ref{fig:catShrink}. As we see from both Table~\ref{tab:non-gaussian} and Figure~\ref{fig:catShrink}, our estimates are close to the oracle estimates.

\begin{table}[ht]
\begin{center}
\begin{tabular}{rrrr}
  \hline
 & 1 & 2 & 3\\ 
  \hline
  boot1 & \textcolor{blue}{0.0516} & 0.550 & 0.511\\
  boot2 &  0.061 &  \textcolor{blue}{0.546} &  \textcolor{blue}{0.475}\\
  oracle & \textcolor{red}{0.002} &  \textcolor{red}{0.538} &  \textcolor{red}{0.442}\\
   \hline
\end{tabular}
\end{center}
\caption{MSE, as a fraction of the MSE of the naive estimate, for each of the $3$ scenarios described (averaged over 20 trials). Boot1 and Boot2 are the (smoothed) one stage and two stage bootstrap estimates (With 100 bootstrap samples). ``Oracle'' is the oracle estimate, using the biases calculated from the true model.}
\label{tab:non-gaussian}
\end{table}

\begin{figure}
  \begin{center}
    \includegraphics[scale=0.4]{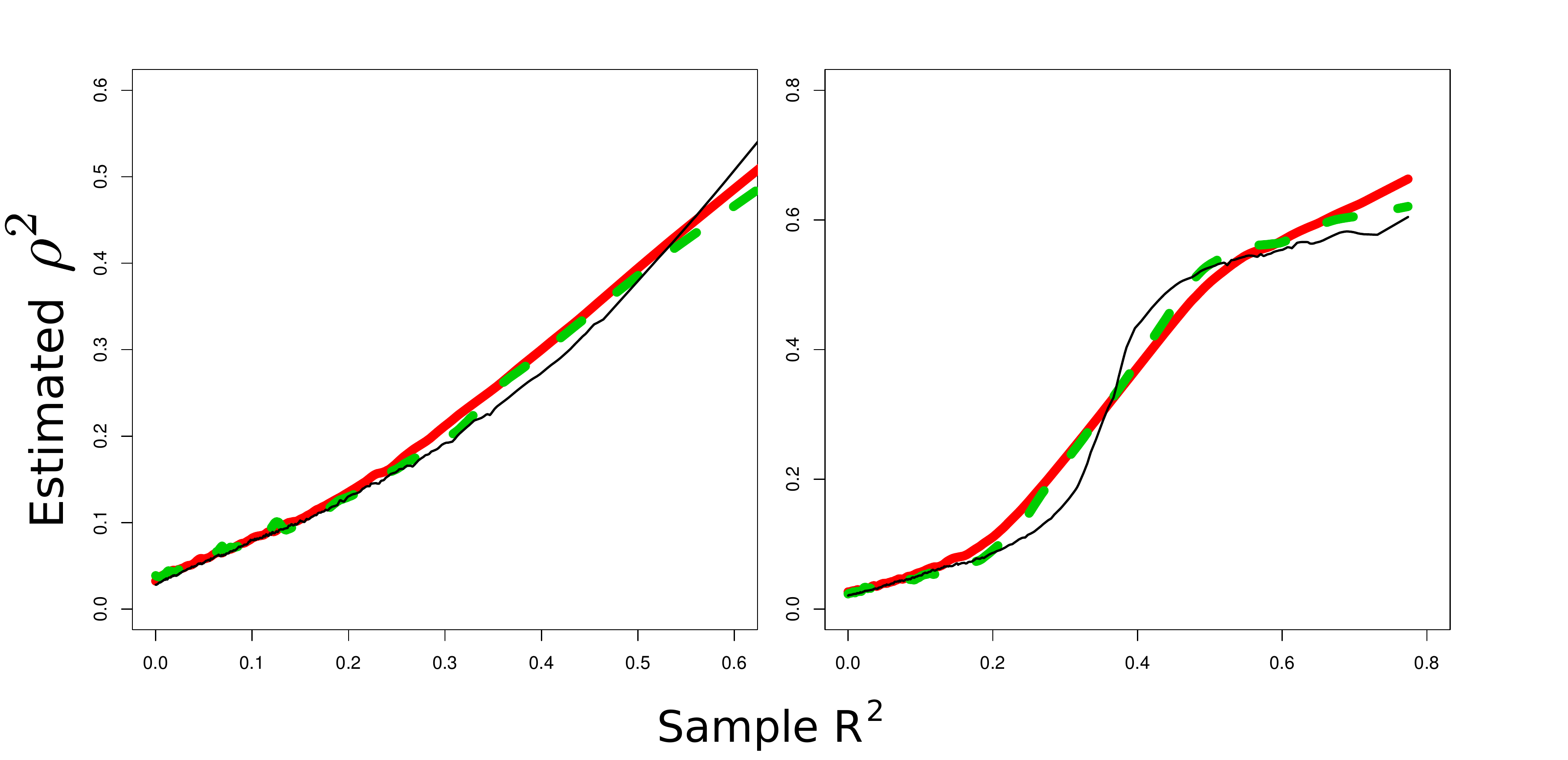}
  \end{center}
  \caption{Plots showing shrinkage for categorical simulations (scenario $2$ on the left, $3$ on the right), with ordered sample $R^2$-values on the x-axis vs estimates of $\rho^2$ on the y-axis. The solid red and dashed green lines give the (smoothed) first and second order bootstrap estimates of $\rho^2$ (based on $100$ bootstrap samples). The skinny black line is the oracle estimate, using the bias calculated from the true model.}
  \label{fig:catShrink}
\end{figure}

\subsection{Real Data}
While simulated data can give insight, it can also be misleading as the efect-sizes for real data rarely follow our simplified simulation schemes. To compare the behavior of the bootstrap and empirical Bayes approaches on real data, we applied both methods to the prostate data of \citet{singh2002}. This data has $102$ samples, $52$ from prostate cancer tumors and $50$ from healthy tissue. For each sample, there are $6033$ gene expression measurements. For each of these $6033$ features we calculated a two sample $t$-statistic (which we transformed to be approximately normal under the null).\\

As shown in Figure~\ref{fig:pros} our bootstrap and empirical Bayes estimates are similar, though the empirical Bayes methods provide more shrinkage in the center. Unfortunately there is no gold standard for this data, so it is hard to compare the performance of the methods beyond this.

\begin{figure}
  \begin{center}
    \includegraphics[scale=0.7]{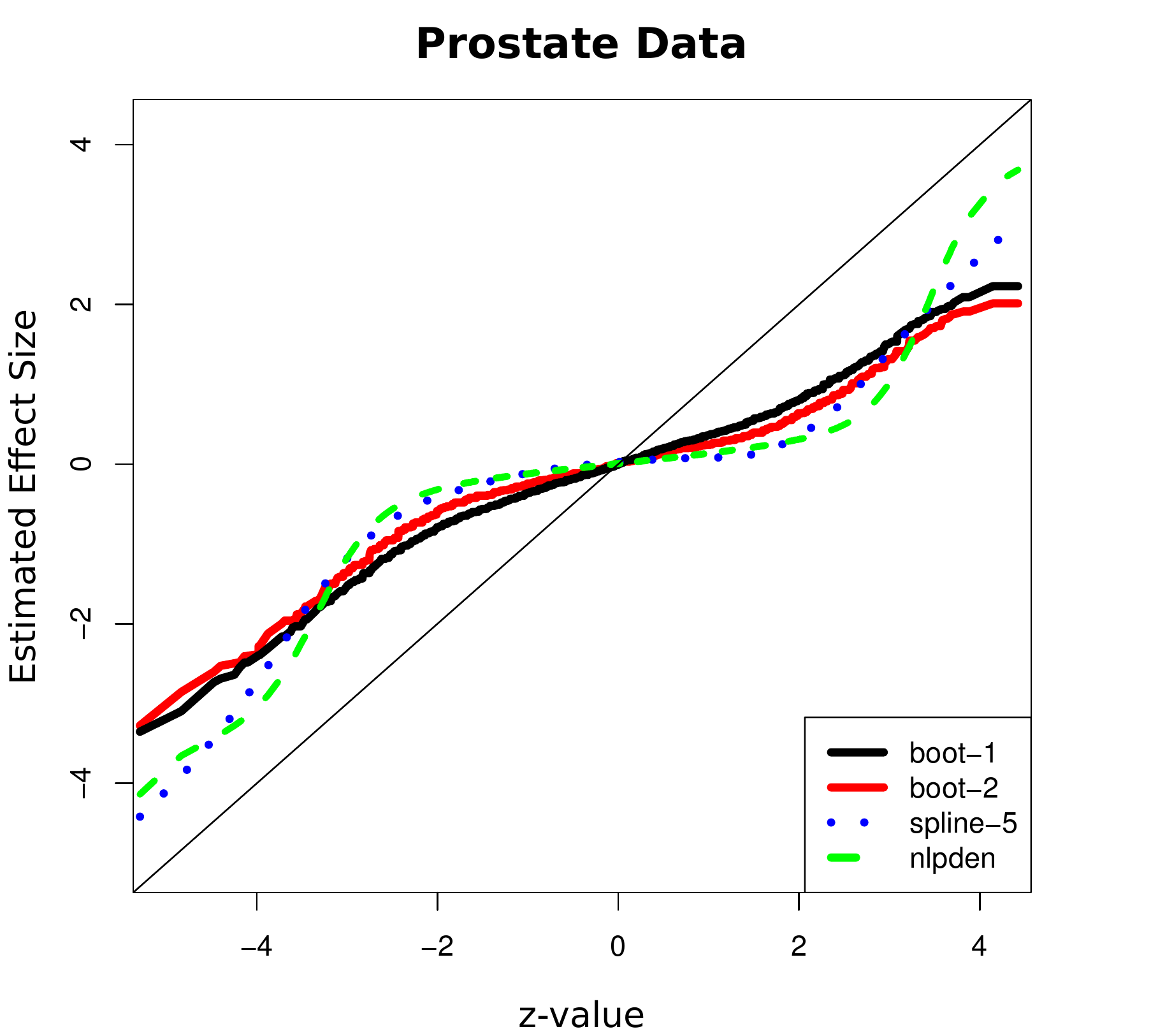}
  \end{center}
  \caption{Plot showing shrinkage for prostate data, with ordered $z$-values on the x-axis vs estimated means on the y-axis. Dotted and dashed lines are empirical Bayes estimates. Solid lines are first-order and second-order bootstrap estimates.}
  \label{fig:pros}
\end{figure}

\section{Discussion}
Large scale estimation of effect-sizes has become of increased importance, in particular in biomedical and epidemiological studies with the development of whole genome assays like gene expression microarrays, genotyping arrays and next generation DNA sequencing. Typical studies attempt to identify univariate differences between disease cases and control. Methods for control of false discoveries are commonly used in such studies, but the effect sizes of the features selected as important are rarely adjusted for selection. This can result in a misleading assessment of the importance of the feature and an overestimation of its value for predictive purposes.\\

In light of this, shrinkage estimators are becoming more and more important. Classical global shrinkage (as in James-Stein type estimators) unfortunately tends to overshrink the estimated size of the most extreme (and therefore interesting) effects. Modern shrinkage estimators (such as our proposal and non-parametric empirical Bayes) leverage local information to give data-adaptive estimates, with much better performance for the effect-sizes of ``interesting'' effects.

In this paper we give a general formalism for frequentist selection bias. We show how this formalism connects to Bayesian ideas for removing selection bias. We propose a resampling estimator for estimating this frequentist selection bias which leads to accurate estimates of effect-size based on locally adaptive shrinkage. Unlike empirical Bayes methods which must be hand-crafted for each scenario, our approach is general and applicable in a wide array of problems. Our method is widely applicable, simple to apply, and performs well in practice.

\section{Appendix}

Here we will give a proof for Theorem~\ref{theorem:Bayes}.
\begin{proof}[Proof of Theorem~\ref{theorem:Bayes}]
To begin we note that sample quantiles converge to population quantiles, or
\[
z_{[\lfloor pt \rfloor]} \overset{p}{\rightarrow} q_t,
\]
so, because $G$ has finite support, we have that: $\operatorname{E}\left[z_{[\lfloor pt \rfloor]}\right] \rightarrow q_t$.\\

Continuing to the second term, using the fact that $\mu_i$ are iid, we have
\begin{align*}
\operatorname{E}\left[\mu_{i(\lfloor pt \rfloor)}\right] &=\sum_{j=1}^p\operatorname{E}\left[\mu_j \middle |\, i\left(\lfloor pt \rfloor\right) = j\right] \operatorname{P}\left(i\left(\lfloor pt \rfloor\right) = j\right)\\
&= \operatorname{E}\left[\mu_1 \middle|\, i\left(\lfloor pt \rfloor\right) = 1\right]
\end{align*}
Further, we see that
\begin{align*}
\operatorname{E}\left[\mu_1 \middle|\, i\left(\lfloor pt \rfloor\right) = 1\right] &= \int_{\mu_1} \mu_1 dP\left(\mu_1 \middle |\, i\left(\lfloor pt \rfloor\right) = 1\right)\\
&= \int_{z_1}\int_{\mu_1} \mu_1\, dP\left(\mu_1,\, z_1 \middle |\, i\left(\lfloor pt \rfloor\right) = 1\right)\\
&= \int_{z_1}\int_{\mu_1} \mu_1\, dP\left(\mu_1 \middle |\, i\left(\lfloor pt \rfloor\right) = 1,\, z_1\right)dP\left(z_1 \middle |\, i\left(\lfloor pt \rfloor\right) = 1\right)\\
&= \int_{z_1}\int_{\mu_1} \mu_1\, \frac{\operatorname{P}\left( i\left(\lfloor pt \rfloor\right) = 1 \middle |\,\mu_1,\, z_1\right)} 
                                  {\operatorname{P}\left( i\left(\lfloor pt \rfloor\right) = 1 \middle |\, z_1\right)}\cdot 
                                  dP\left(\mu_1 \middle |\, z_1\right)\cdot
                                  dP\left(z_1 \middle |\, i\left(\lfloor pt \rfloor\right) = 1\right)\\
\end{align*}
but note that
\[
\operatorname{P}\left( i\left(\lfloor pt \rfloor\right) = 1 \middle |\,\mu_1,\, z_1\right) = \operatorname{P}\left( i\left(\lfloor pt \rfloor\right) = 1 \middle |\, z_1\right)
\]
so we get
\begin{align*}
\operatorname{E}\left[\mu_1 \middle|\, i\left(\lfloor pt \rfloor\right) = 1\right] &= \int_{z_1}\int_{\mu_1} \mu_1\, dP\left(\mu_1 \middle |\, z_1\right)
                                  ) dP\left(z_1 \middle |\, i\left(\lfloor pt \rfloor\right) = 1\right)\\
&= \int_{z_1}\operatorname{E}\left[\mu_1 \middle |\, z_1\right]dP\left(z_1 \middle |\, i\left(\lfloor pt \rfloor\right) = 1\right)\\
&\rightarrow \operatorname{E}\left[\mu_1 \middle |\, z_1 = q_t\right].
\end{align*}
The last line follows through concentration of measure (combined with the finite support of $G$), because, as we have shown, $z_{[\lfloor pt \rfloor]} \rightarrow q_t$, thus completing the proof\hfill $\blacksquare$

\end{proof}

\bibliographystyle{abbrvnat}
\bibliography{man}

\end{document}